\documentclass[letterpaper]{article} 
\usepackage{aaai2027}  
\usepackage[hyphens]{url}  
\usepackage{graphicx} 
\usepackage{natbib}  
\usepackage{caption} 
\usepackage{algorithm}
\usepackage{algorithmic}

\usepackage{newfloat}
\usepackage{listings}
\DeclareCaptionStyle{ruled}{labelfont=normalfont,labelsep=colon,strut=off} 
\floatstyle{ruled}
\newfloat{listing}{tb}{lst}{}
\floatname{listing}{Listing}

\usepackage{booktabs}
\usepackage{amsmath}
\usepackage{amssymb}
\usepackage{mathtools}
\usepackage{amsthm}
\usepackage{microtype}
\usepackage{subcaption}
\theoremstyle{plain}
\newtheorem{theorem}{Theorem}[section]

\newtheorem{lemma}[theorem]{Lemma}

\theoremstyle{definition}

\theoremstyle{remark}

\title{Self-Evolving Default Action in Counterfactual Credit Assignment for Cooperative Continuous Control}
\author{
    Shuangyao Huang\textsuperscript{\rm 1}\corresponding\thanks{Supported by XJTLU Research Development Fund: RDF-23-02-026. }
}
\affiliations{
    \textsuperscript{\rm 1}Xi'an Jiaotong-Liverpool University\\
    School of Internet of Things\\
    shuangyao.huang@xjtlu.edu.cn
}

\begin{document}

\maketitle

\begin{abstract}
	Counterfactual credit assignment has proven effective in multi-agent reinforcement learning (MARL) for discrete action spaces, yet its extension to continuous-action cooperative tasks remains challenging. Existing methods that approximate the counterfactual baseline via Monte Carlo sampling often introduce bias into policy gradients and fail to guarantee convergence to local optima, as the sampled actions may not have been sufficiently trained. To address these limitations, we propose SAFE, a novel MARL framework that employs a counterfactual baseline conditioned on a self-evolving default action sampled from each agent's experience buffer. This design naturally extends to continuous action spaces without relying on additional simulations, reward models, or environment-specific prior knowledge. The baseline accurately quantifies each agent's contribution, and introduces no bias into the deterministic policy gradient, ensuring convergence to local optima. Extensive experiments on cooperative vehicular tasks demonstrate that SAFE consistently outperforms state-of-the-art models. 
\end{abstract}


\section{Introduction} 


Multi-agent reinforcement learning (MARL) has achieved notable success in simulated decentralized cooperative games, such as StarCraft II \cite{Starcraft2}, yet its deployment in real-world continuous-control tasks, such as autonomous vehicular networking, remains limited. These applications present three critical challenges. First, these applications usually feature continuous control, precluding credit assignment schemes that are designed around discrete action space. Second, these applications typically require fully decentralized policies for safety and prompt decision-making, since reliable communication is not guaranteed under all circumstances due to environmental limitations. Last, these applications present large joint state and action spaces, requiring credit assignment schemes to be scalable under the centralized training with decentralized execution (CTDE) \cite{ctde} paradigm. 



Decentralized cooperative models that work with continuous action spaces have been largely limited to multi-agent deep deterministic policy gradient (MADDPG) \cite{maddpg} and MAPPO \cite{mappo}, but they do not scale well under the CTDE paradigm, as it requires the joint observation and action of all agents as input. Credit assignment schemes bypass the scaling issues associated with joint observation and action space, through counterfactual baseline or value factorization. The value factorization schemes \cite{vdn, qmix} have limited representation capability as they impose restrictive monotonicity assumptions and do not extend naturally to continuous action space as agents select actions greedily by taking $\arg\max$ on their policy. On the other hand, counterfactual credit assignment (COMA) \cite{coma} offers an alternative by quantifying each agent's contribution with a baseline that marginalizes out the agent's action from a centralized critic. However, COMA is inherently limited to discrete action spaces, and naive Monte Carlo extensions to continuous spaces introduce bias into policy gradients and lack convergence guarantees, because the sampled actions may be undertrained and the baseline expectation is not zero under the joint policy. 


This raises a central research question: \textit{How can we design a counterfactual baseline for continuous action spaces that accurately quantifies each agent's contribution while preserving unbiased policy gradients and convergence guarantees?} The success of counterfactual credit assignment \cite{coma} suggests that the optimal baseline is derived by taking the expectation of the $Q$ value of all available actions. In this way, the baseline is learned directly from agents' experiences, rather than reward models or user selected task-dependent default actions. However, this approach is limited to discrete action spaces, since the expectation presumes a finite set of available actions for each agent. One possible extension to continuous action spaces is to approximate the expectation with Monte Carlo sampling. However, unlike counterfactual values in discrete action spaces, the sampled actions may not have been sufficiently trained to the centralized critic. As a result, the values conditioned on these actions do not accurately quantify the contribution of an agent. Moreover, these undertrained sampled actions introduce biases to the joint policy gradients, since the expectation of the $Q$ values conditioned on the sampled actions is not zero. 

We answer this question by introducing SAFE (\textit{S}elf-evolving default \textit{A}ction \textit{F}rom \textit{E}xperiences), a novel MARL framework whose core innovation is a counterfactual baseline conditioned on a self-evolving default action sampled from each agent's experience replay buffer. Unlike prior work, this baseline naturally extends to continuous action space. As we proved mathematically, this baseline introduces zero bias into the deterministic policy gradient, ensuring convergence to local optima. Moreover, our experiments demonstrate that this baseline adapts dynamically as the agent's behavior evolves during training, with the sampled default action increasingly reflecting the agent's average performance. Empirically, on cooperative autonomous driving tasks with varying numbers of vehicles and obstacles, SAFE consistently outperforms state-of-the-art baselines including VDN, QMIX, and COMA. Ablation studies further confirm that the performance gains stem directly from the credit assignment mechanism, not merely from continuous action adaptation. Parameter analyses reveal that a single sampled default action forms a more effective baseline than averaging multiple samples, validating our design principle. 

\section{Related Work} 

Independent $Q$-learning (IQL) \cite{iql} is a strong benchmark in MARL tasks with discrete action spaces. However, IQL has no convergence guarantees, as agents independently explore and update their policies, the environment becomes non-stationary from each agent's viewpoint. MADDPG \cite{maddpg} exploits the CTDE paradigm that allows agents to access all local information and global state in training, addressing the non-stationary problem. However, it does not scale well with the number of agents as the input dimension grows exponentially. 

Credit assignment schemes provide better solutions for large scale multi-agent tasks. Explicit credit assignment schemes such as counterfactual multi-agent policy gradients (COMA) \cite{coma} and Shapley $Q$-learning \cite{shapley, shapley1} 
explicitly derive an advanced function for each agent that evaluates the contribution of the agent's action in the cooperative task. COMA estimates an agent's contribution with a baseline defined by the expectation of counterfactual $Q$ values. The agent's policy is then updated to maximize the difference rewards \cite{difference_rewards} between the actual $Q$ value and this baseline. However, they show poor performance in practice due to the sample inefficiency of the on-policy training approach of COMA and the high computational complexity in deriving the baseline with Shapley $Q$ values. 

On the other hand, implicit credit assignment schemes such as value-decomposition networks (VDN) \cite{vdn} and QMIX \cite{qmix} factorize the joint action values as a monotonic combination of individual action values based on the individual-global-max (IGM) condition, which guarantees consistency between the joint and individual action values. However, value factorization leads to unbounded divergence of individual value functions and unpredictable agent behaviors \cite{divergence}. Moreover, the representation capability of the mixing network is limited as it strongly depends on the assumption of monotonic relations. Most importantly, existing credit assignment schemes are specifically designed for discrete actions spaces and do not extend well to multi-agent tasks involving continuous action spaces.


\section{Background} 


We consider a fully cooperative multi-agent setting in which a team of agents select sequential actions within a stochastic and partially observable environment. This problem can be formulated as a decentralized partially observable Markov decision process (Dec-POMDP). 
A Dec-POMDP is defined by a tuple $(\mathcal{N}, \mathcal{S}, \mathcal{A}, r, \mathcal{O}, \mathcal{P})$, where $\mathcal{N} \coloneq \{1, \cdots, N\}$
denotes the set of $N$ agents. $\mathcal{S}$, $\mathcal{O}$ and $\mathcal{A}$ are state, observation and action space of the environment, respectively. Lastly, $r$ is a reward signal and $\mathcal{P}$ is the state transition probability of the environment. 

At each step, each agent makes an observation $o_i\in \mathcal{O}$, $i\in\mathcal{N}$ on the environment, and selects an action $a_i\in \mathcal{A}$, $i\in\mathcal{N}$ based on its observation, yielding the joint action $\boldsymbol{a} \coloneq \{a_i\}_{i=1}^{N}\in\mathcal{A}^N$. After all agents execute their actions $\boldsymbol{a}$ in state $s \in \mathcal{S}$ simultaneously, the environment transits from state $s$ to $s'$ with probability $P(s'|s, \boldsymbol{a}): \mathcal{S}\times\mathcal{A}^N\times\mathcal{S}\rightarrow[0,1]$, and returns a numerical reward $r(s, \boldsymbol{a}): \mathcal{S}\times\mathcal{A}^N\rightarrow\mathbb{R}$. 
In addition, the history of each agent's observations and actions is denoted $\tau_i \coloneq \{(o_i, a_i)_t\}_{t=1}^{T}$. 

In fully cooperative tasks, all agents within the team receive one team reward. Eventually, each agent optimizes its policy $a_i = \pi(\tau_i)$ by maximizing a long-term team return defined by the accumulation of the team reward over time, which induces a joint action-value that follows the Bellman equation: 
\begin{equation} \label{joint action-value} 
	\begin{gathered} 
		Q_{tot}(s_t, \boldsymbol{a}_t) = \mathbb{E}[\sum_{l=0}^{\infty}\gamma^{l}r_{t+l}] 
		= r_t + \gamma \mathbb{E}[Q_{tot}(s_{t+1}, \boldsymbol{a}_{t+1})], 
	\end{gathered} 
\end{equation} 
where $\gamma\in[0, 1)$ is a discount factor. 

\subsection{Deterministic Policy Gradient} 

Deterministic policy gradient (DPG) \cite{dpg, ddpg} is an actor-critic method specifically adapted for continuous action space. Similar to $Q$-learning \cite{dqn}, the actions are selected maximizing the optimal action-value function at each step. When the action space is continuous, the action-value function is presumed to be differentiable with respect to the action. Assuming the actions are selected by a deterministic policy $a_t = \pi(s_t)$ which maximizes the $Q$ function, the optimal $Q$ value can be approximated with: 
\begin{equation} \label{dpg} 
	\begin{gathered} 
		\max_{a}Q(s_t, a)\approx Q(s_t, \pi(s_t)), 
	\end{gathered} 
\end{equation} 
where $Q(s_t, \pi(s_t))$ is subject to Eq. \eqref{joint action-value}. 

With this approximation, the critic can be updated with gradient descent by solving: 
\begin{equation} \label{dpg_critic} 
	\begin{gathered} 
		\min_{\phi} \mathbb{E}_{(s, a, r, s')\sim \mathcal{D}} \biggl[\biggl(Q_{\phi}(s, a)-\Bigl(r+\gamma Q_{\bar{\phi}}(s', \pi_{\bar{\theta}}(s'))\Bigr)\biggr)^2\biggr], 
	\end{gathered} 
\end{equation} 
and the actor is optimized with gradient ascent by maximizing the critic's output: 
\begin{equation} \label{dpg_actor} 
	\begin{gathered} 
		\max_{\theta} \mathbb{E}_{s\sim \mathcal{D}} [Q_{\phi} (s, \pi_{\theta}(s))], 
	\end{gathered} 
\end{equation} 
where $\mathcal{D}$ is a reply buffer, $\phi$ and $\theta$ are parameters of the critic and actor, and $\bar{\phi}$ and $\bar{\theta}$ are parameters of the target networks.

\subsection{Counterfactual Credit Assignment} 

Counterfactual credit assignment \cite{coma, shapley} derives a counterfactual baseline from the centralize critic for each agent. An advantage function is derived by subtracting the counterfactual baseline from the joint action value following the concept of difference rewards: 
\begin{equation} \label{coma_advantage} 
	\begin{gathered} 
		A_i(s, \boldsymbol{a}) = Q(s, \boldsymbol{a}) - b_i, 
	\end{gathered} 
\end{equation} 
where $i\in\mathcal{N}$. The counterfactual baseline $b_i$ is derived by marginalizing out an agent or its action from the centralized critic, and replacing with another agent or default actions. In COMA, $b_i$ is obtained by replacing an agent's action with every possible individual action available to that agent, and then taking the expectation of all resulting critic values: 
\begin{equation} \label{coma_baseline} 
	\begin{gathered} 
		b_i = \sum_{\bar{a}_i\in \mathcal{A}} \pi(s, \bar{a}_i)Q(s, \boldsymbol{a}_{-i}, \bar{a}_i), 
	\end{gathered} 
\end{equation} 
where $\pi(s, a)$ is a stochastic policy defined for discrete action space and $\boldsymbol{a}_{-i}$ denotes the joint action except the action taken by the current agent. 

Lastly, the policy gradient is given by: 
\begin{equation} \label{coma_pg} 
	\begin{gathered} 
		g_{\theta} = \mathbb{E}_{\pi} \biggl[\sum_{i\in\mathcal{N}}\nabla_{\theta}\log \pi_{i}(s, a_i) A_i(s, \boldsymbol{a})\biggr], 
	\end{gathered} 
\end{equation} 
with ${\theta}$ being the parameters of actors. 

Given the fact that $Q(s, \boldsymbol{a})$ evaluates the global $Q$-value of the taken action and the baseline reflects the average result that can be obtained under all other possible actions of the agent, the advantage function in Eq. \eqref{coma_advantage}, therefore, reflects the advantage of the taken action over the average result. 

It is worth noting that the effectiveness of the baseline in Eq. \eqref{coma_baseline} is built on the assumption of a discrete action space $\mathcal{A}$. With this assumption, each action $\bar{a}_i\in \mathcal{A}$ has been executed a sufficient number of times by agent $i$ for state $s$ during training. As a result, the estimation of counterfactual $Q$ value $Q(s, \boldsymbol{a}_{-i}, \bar{a}_i)$ is sufficiently trained. Consequently, the baseline $b_i$ accurately quantifies the contribution of agent $i$. 

\section{{Algorithm}}
%

\begin{figure*}[bt] 
	\centering
	\begin{subfigure}[b]{.60\linewidth}
		\centering
		\includegraphics[width=1.0\linewidth]{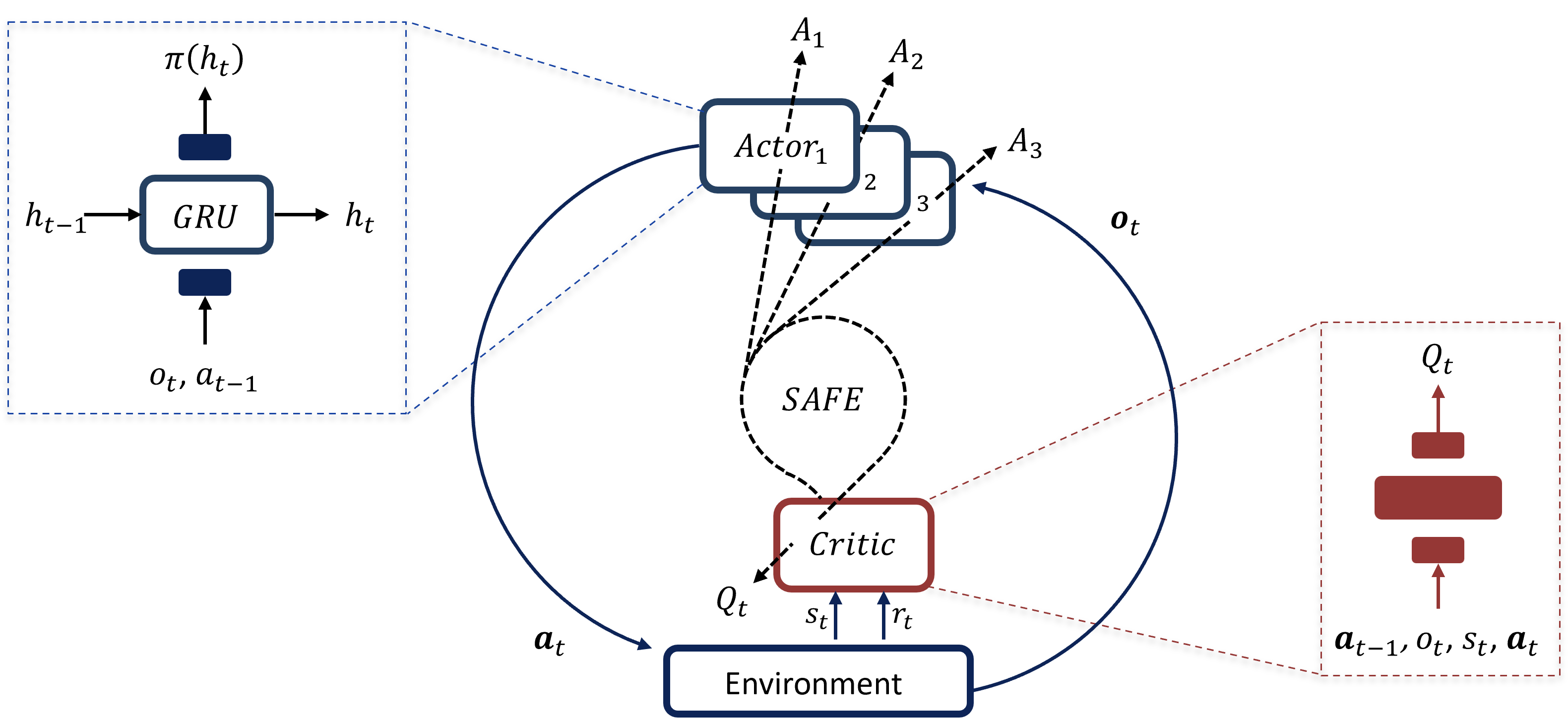} 
		\caption{} 
		\label{framework1} 
	\end{subfigure} 
	\begin{subfigure}[b]{1.0\linewidth}
		\centering
		\includegraphics[width=1.0\linewidth]{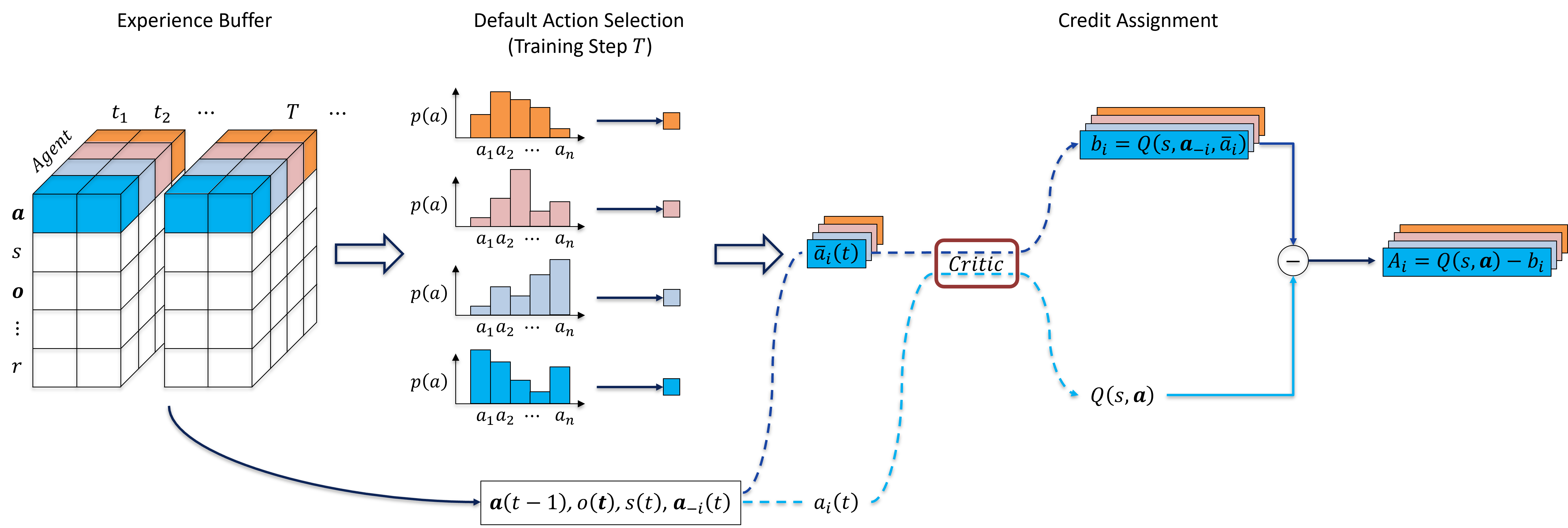} 
		\caption{} 
		\label{framework2} 
	\end{subfigure} 
	\caption{(a). Architectures of the centralized actors and the centralized critic, as while as the information flow between them for credit assignment in training. (b). Credit assignment with the self-evolving default action selected from the agents' experience buffer. } 
	\label{fig:framework} 
\end{figure*} 

Counterfactual credit assignment has gain considerable success in multi-agent settings with discrete action spaces. However, it leads to poor performances in continuous-action tasks, especially for real-world tasks that require fine-grained control where a subtle change in action can lead to drastically different outcomes. Approximating the expectation in Eq. \eqref{coma_baseline} with Monte Carlo sampling does not guarantee performance as the sampled actions may not have been frequently executed during training. Moreover, the baseline approximated via Monte Carlo sampling introduces bias into the policy gradient and does not guarantee convergence to local optima of policies. 
This highlights the importance of developing a counterfactual baseline that quantifies the contribution of continuous actions. The remaining part of this section is dedicated to addressing the above difficulty. 

First, this paper adopts a centralized critic conditioned on the true global state $s$ and the joint action-observation histories $\boldsymbol{\tau}$. The centralized critic is feasible as it is used only during training and only the actor is needed for execution. Each actor conditions on its own action-observation histories $\tau_i$. Parameter sharing is used among actors as agents have the same role in most cooperative applications, such as vehicular control. The network architectures are illustrated in Fig. \ref{framework1}. 

For credit assignment, this paper adopts a counterfactual baseline inspired by difference rewards, in which each agent learns from a shaped reward: 
\begin{equation} \label{diff_reward} 
	\begin{gathered} 
		D_i = r(s, \boldsymbol{a}) - r(s, (\boldsymbol{a}_{-i}, \bar{a}_i)). 
	\end{gathered} 
\end{equation} 

Eq. \eqref{diff_reward} compares the global true reward received after each step of an MDP process and a reward conditioned on a default action $\bar{a}_i$ of agent $i$. Any action $a_i$ that improves $D_i$ also improves the team reward $r(s, \boldsymbol{a})$ as $r(s, (\boldsymbol{a}_{-i}, \bar{a}_i))$ does not depend on $a_i$. However, Eq. \eqref{diff_reward} requires repeated access to a training simulator to estimate the value of actions $\boldsymbol{a}$ and $(\boldsymbol{a}_{-i}, \bar{a}_i)$ for the same given state $s$. This is infeasible when the state space is continuous as it dramatically increases the number of simulations. Moreover, it requires a carefully selected user-defined default action, which relies on in-depth knowledge of the task and must be modified on a per-task basis. Additionally, the default action should be selected such that the expectation of critic values with regard to the policy conditioned on the default action is zero, bringing no bias to the policy gradient. COMA \cite{coma} implements difference rewards using a centralized critic, which avoids the difficulties associated with simulation overload and default action selection while ensuring policy convergence. However, its critic architecture supports only discrete action spaces. 

A key idea of this paper is that difference rewards can be implemented using a centralized critic with a default action that is selected from the agent's experience. Specifically, the default action is chosen based on the probability density distribution of all previously executed actions and evolves adaptively as the agent learns from the environment. 
In this way, the approximation of difference rewards can be extended to continuous action space while remaining applicable to discrete action space. Moreover, the selection of default action does not rely on additional simulations, reward models, or environment dependent knowledge. 

This paper trains a centralized critic that estimates the global $Q$ value for the joint action $\boldsymbol{a}$ under a state $s$. For each agent $i$, a counterfactual baseline is derived from the centralized by replacing the $a_i$ with a default action $\bar{a}_i(t)$ at training step $t$, while maintaining other agents' inputs unchanged: 
\begin{equation} \label{our_baseline} 
	\begin{aligned} 
		b_i = Q(s, \boldsymbol{a}_{-i}, \bar{a}_i(t)), \ \bar{a}_i(t)\sim \mathcal{D}_t. 
	\end{aligned} 
\end{equation} 

The default action $\bar{a}_i(t)$ is sampled uniformly from the agent's experience buffer $\mathcal{D}_t$ at training step $t$. Consequently, $b_i$ estimates the average performance of agent $i$ with increasing accuracy, as $\bar{a}_i(t)$ approaches the action that is most frequently executed by the agent as $t$ increases. As a result, the values conditioned on these actions will be sufficiently trained and the advantage function $A_i$ in Eq. \eqref{coma_advantage} accurately quantify the contribution of agent $i$. 
The credit assignment with the self-evolving default action selected from the agents' experience buffer is illustrated in Fig. \ref{framework2}.

Next, we demonstrate that a policy gradient using our baseline converges to a local optimum in \textbf{Lemma} \ref{lemma1}. In other words, the baseline does not affect the convergence of the deterministic policy gradient. Previous work on single agent actor-critic \cite{suttonpg} has shown that a baseline $b$ does not affect the convergence of stochastic policy gradient if $b$ is action-independent. COMA \cite{coma} further proves that in multi-agent scenarios, a stochastic policy gradient is not affected if baseline $b_i$ is $a_i$ independent. A natural question arises here: \textit{Can this conclusion be generalized to a deterministic policy gradient?} The authors in \cite{dpg} have illustrated that deterministic policy $\pi_{\theta}$ is a special case of stochastic policy by writing the stochastic policy as $\mu_{\pi_{\theta}, \sigma}$, where $\sigma$ represents a variance, such that 
\begin{equation} \label{conversion} 
	\begin{gathered} 
		\lim_{\sigma\rightarrow0}\mu_{\pi_{\theta}, \sigma}=\pi_{\theta}.
	\end{gathered} 
\end{equation}

%
%

Let set $\mathcal{A}_{\sigma} = \{a_1, a_2, \cdots, a_n\}$ denote a subset of $\mathcal{A}$: $a_i\in\mathcal{A}$, for $i\in [1, n]$. By the definition of stochastic policy, $\sum_{a\in\mathcal{A}_{\sigma}}\mu_{\pi_{\theta}, \sigma}(a|\tau)=1$. We have \textbf{Lemma} \ref{lemma1} that demonstrates our baseline ensures convergence of stochastic policy gradients and generalizes to deterministic policy gradients. 
\begin{lemma} \label{lemma1}
	For a policy gradient at step $t$: 
	\begin{equation} \label{proof1} 
		\begin{gathered} 
			g_t = \mathbb{E}_{\mu} \left[\sum_{i}\nabla_{\theta}\log\mu_{\theta, \sigma}(a_i|\tau_i)\cdot A_i
			\right], \\ 
			A_{i} = Q(s, \boldsymbol{a})-b_i(s, \boldsymbol{a}_{-i}). 
		\end{gathered} 
	\end{equation}
	where $b_i$ is an agent-specific baseline defined in Eq. \eqref{our_baseline}, we have 
	\begin{equation} \label{proof2} 
		\begin{gathered} 
			\lim\limits_{t\rightarrow \infty} ||g_t|| = 0. 
		\end{gathered} 
	\end{equation}
\end{lemma}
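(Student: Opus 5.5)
The plan is the standard COMA-style argument: split $g_t$ into the part driven by $Q(s,\boldsymbol{a})$ and the part driven by the baseline, show the baseline part vanishes in expectation, and then use actor-critic convergence for the first part. Write
\[
g_t = \mathbb{E}_{\mu}\Bigl[\sum_i \nabla_\theta \log\mu_{\theta,\sigma}(a_i|\tau_i)\, Q(s,\boldsymbol{a})\Bigr] - \mathbb{E}_{\mu}\Bigl[\sum_i \nabla_\theta \log\mu_{\theta,\sigma}(a_i|\tau_i)\, b_i(s,\boldsymbol{a}_{-i})\Bigr].
\]
The first term is the ordinary (joint) stochastic policy gradient with critic $Q$. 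Under the usual two-timescale assumptions of \cite{suttonpg}, it converges to a local optimum, i.e.\ its norm tends to zero. These assumptions are a compatible or sufficiently accurate critic, diminishing step sizes with the critic on the faster timescale, and bounded rewards and gradients. So the whole task reduces to showing that the second term is identically zero at every step $t$.

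For the baseline term, I would fix $i$ and condition on $s$, $\boldsymbol{\tau}$, and $\boldsymbol{a}_{-i}$. The key point is that $b_i = Q(s,\boldsymbol{a}_{-i},\bar a_i(t))$ with $\bar a_i(t)\sim\mathcal{D}_t$. The default action is drawn from the replay buffer independently of the current action $a_i\sim\mu_{\theta,\sigma}(\cdot|\tau_i)$, so $b_i$, even after averaging over the buffer draw, depends on neither $a_i$ nor $\theta$ through $a_i$. Hence
\[
\mathbb{E}_{a_i\sim\mu}\bigl[\nabla_\theta\log\mu_{\theta,\sigma}(a_i|\tau_i)\, b_i\bigr] = b_i \sum_{a\in\mathcal{A}_\sigma}\nabla_\theta \mu_{\theta,\sigma}(a|\tau_i) = b_i\,\nabla_\theta \sum_{a\in\mathcal{A}_\sigma}\mu_{\theta,\sigma}(a|\tau_i) = b_i\nabla_\theta 1 = 0.
\]
For a continuous density the sum becomes an integral, and interchanging $\nabla_\theta$ with the integral needs a dominated-convergence condition on $\nabla_\theta\mu$. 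Summing over $i$ and taking the outer expectation over the states and the other agents' actions shows the baseline term vanishes, so $g_t$ has the same expectation as the baseline-free gradient.

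To pass to the deterministic case, I would use Eq.~\eqref{conversion} and the result of \cite{dpg}: as $\sigma\to0$, the stochastic policy gradient converges to the deterministic policy gradient $\mathbb{E}\bigl[\nabla_\theta\pi_\theta(\tau_i)\,\nabla_{a_i}Q\bigr]$. In that limit the baseline contributes $\nabla_{a_i} b_i = 0$ directly, since $b_i$ does not depend on $a_i$, which gives a second, cleaner check of the same cancellation. The convergence $\lim_{t\to\infty}\|g_t\|=0$ then follows from standard DPG convergence with the unaltered gradient.

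The main obstacle is the independence claim. The buffer $\mathcal{D}_t$ contains past actions of the same policy, so $\bar a_i(t)$ is statistically related to $\theta$ and to the distribution of $a_i$. I would handle this by conditioning on $\mathcal{D}_t$ (which is fixed at step $t$ and not differentiated through), so that within step $t$ the baseline is a constant with respect to the current sample $a_i$. A secondary issue is making the $\sigma\to0$ limit interchange with $t\to\infty$ rigorous; there I would simply invoke the regularity conditions of \cite{dpg} rather than prove uniformity.
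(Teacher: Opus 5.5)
Your proposal is correct and follows the paper's proof essentially step for step. You split $g_t$ into a $Q$-term and a baseline term, and kill the baseline term with the score-function identity $b_i\nabla_\theta\sum_{a_i}\mu_i(a_i|\tau_i)=b_i\nabla_\theta 1=0$ because $b_i$ does not depend on $a_i$. You then cite actor-critic convergence for the remaining joint gradient $\mathbb{E}_\mu[\nabla_\theta\log\boldsymbol{\mu}(\boldsymbol{a}|s)Q(s,\boldsymbol{a})]$ and pass to the deterministic policy via $\sigma\to 0$. Your extra care goes beyond the paper without changing the route: you use an integral instead of the paper's finite sum over $\mathcal{A}_\sigma$, condition on $\mathcal{D}_t$, and check directly that $\nabla_{a_i}b_i=0$ in the DPG limit.
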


\begin{proof} 
	We need to only look at the expectation of the baseline $b_i$ with regard to the current policy. Since $\mu_{\theta, \sigma}(\cdot|\tau)$ is a stochastic policy, we can write the joint policy of all agents as $\boldsymbol{\mu}(\boldsymbol{a}|s) = \prod_{i}\mu_i(a_i|\tau_i)$. Let $d^{\boldsymbol{\mu}}(s)$ be the discounted state distribution probability: $d^{\boldsymbol{\mu}}(s)= \sum_{t=0}^{\infty}\gamma^tPr(s_t=s|s_0, \boldsymbol{\mu})$, the expectation of the baseline is written as: 
	\begin{equation*} \label{gb} \nonumber
		\begin{aligned} 
			g^{(1)} &= -\mathbb{E}_{\mu} \left[\sum_{i}\nabla_{\theta}\log\mu_i(a_i|\tau_i)\cdot b_i\right] \\ 
			= -&\sum_{s}d^{\boldsymbol{\mu}}(s) \sum_{i} \sum_{\boldsymbol{a}^{-i}}\boldsymbol{\mu}^{-i}\cdot  
			{\sum_{{a}_{i}\in\mathcal{A}_{\sigma}} \mu_i({a}_i|\tau_i) \nabla_{\theta}\log\mu_i({a}_i|\tau_i) b_i} \\ 
			= -&\sum_{s}d^{\boldsymbol{\mu}}(s) \sum_{i} \sum_{\boldsymbol{a}^{-i}}\boldsymbol{\mu}^{-i}\cdot 
			\underbrace{b_i \sum_{{a}_{i}\in\mathcal{A}_{\sigma}} \mu_i({a}_i|\tau_i) \nabla_{\theta}\log\mu_i({a}_i|\tau_i)}_{(b_i \text{ is not a function of }{a}_i)} \\
			= -&\sum_{s}d^{\boldsymbol{\mu}}(s) \sum_{i} \sum_{\boldsymbol{a}^{-i}}\boldsymbol{\mu}^{-i}\cdot 
			b_i \nabla_{\theta}\sum_{{a}_{i}\in\mathcal{A}_{\sigma}}\mu_i({a}_i|\tau_i)\\ 
			= -&\sum_{s}d^{\boldsymbol{\mu}}(s) \sum_{i} \sum_{\boldsymbol{a}^{-i}}\boldsymbol{\mu}^{-i}\cdot 
			b_i \nabla_{\theta}1\\
			=\ \ \ \ &0. 
		\end{aligned} 
	\end{equation*}
	
	The reminder of Eq. \eqref{proof1} is simply the standard single-agent actor-critic policy gradient: 
	\begin{equation} \label{proof3} 
		\begin{gathered} 
			g_t^{(2)} = \mathbb{E}_{\mu} \left[\nabla_{\theta}\log \boldsymbol{\mu}(\boldsymbol{a}|s) Q(s, \boldsymbol{a}) \right]. 
		\end{gathered} 
	\end{equation}
	
	It is shown in \cite{actor-critic} that an actor-critic following this gradient converges to a local maximum of the objective function: 
	\begin{equation} \label{proof4} 
		\begin{gathered} 
			\lim\limits_{t\rightarrow \infty} ||g_t^{(2)}|| = 0. 
		\end{gathered} 
	\end{equation}
	
	Hence, $b_i$ does not introduce bias in the policy gradient of $\mu$, and therefore $\pi$, as $\mu\equiv\pi$ when $\sigma=0$. 
\end{proof}

\section{Experimental Setup}

Highway-Env \cite{highway-env} is a \textit{Gym} compatible environment designed for cooperative vehicular tasks, including highway, intersection, and merging traffic scenarios. We focus on cooperative autonomous driving in highway scenario which is illustrated in Fig. \ref{fig:scenario}, where the controlled vehicles (green) are driving on a multi-lane highway populated with obstacle vehicles (blue). In this task, the controlled vehicles travel from the left of the monitor to the right, while the obstacle vehicles travel in the opposite direction toward them. The controlled vehicles are positioned in close proximity to each other, such that even a minor change in heading may result in a collision. 

\begin{figure}[bt]
	\centering
	\includegraphics[width=0.45\textwidth]{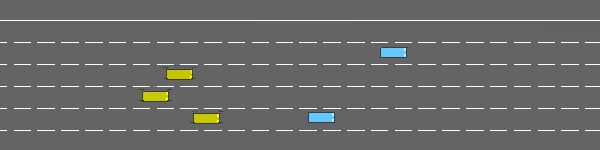} 
	\caption{Highway cooperative autonomous driving tasks. Controlled vehicles are shown in green, while obstacle vehicles are depicted with blue. }
	\label{fig:scenario}
\end{figure}

During training, each agent observes a set of kinematic features for itself and its neighbors, including position, velocity, and heading. The action of an agent directly controls its steering angle in a range of $[-{\pi}/{4}, {\pi}/{4}]$ to maintain proximity to neighbors. The reward function for an agent includes collision penalty, and components encouraging formation and efficiency. The team reward is the sum of all agent rewards. More details can be found in the environment's documentation. 


Experiments are conducted under four scenarios: 2-Vehicle-1-Obstacle (2V1O), 3-Vehicle-2-Obstacle (3V2O), 5-Vehicle-2-Obstacle (5V2O), and 7-Vehicle-2-Obstacle (7V2O). The benchmark methods compared are VDN, QMIX, IQL, COMA, and MAPPO. 
For models designed for discrete action spaces, the agents use discrete meta-actions, namely \{0: \textit{LEFT LANE}, 1: \textit{IDLE}, 2: \textit{RIGHT LANE}, 3: \textit{FASTER}, 4: \textit{SLOWER}\}. All experiments are conducted with three independent runs on an NVIDIA RTX 5060Ti GPU with 16$GB$ memory. 

\subsection{Network Structure and Training}

The actor consists of 128-neuron gated recurrent units (GRU) \cite{gru} that use fully connected layers to process the input and to produce the output values from the hidden state. The agent's observation is fed into the input layer. Actions are produced from the final layer $\boldsymbol{z}$ via a $\tanh$ activation added by a random value $\mu$ controlled by parameter $\epsilon$: $a_i=(1-\epsilon)\tanh(\boldsymbol{z})_i + \epsilon \mu$, where $\mu$ is sampled from the action space uniformly. Parameter $\mu$ is annealed exponentially from 1.0 to 0.05 across 50,000 training episodes. The centralized critic consists of multiple fully connected layers with ReLU activation functions. The global state, the agent's observation, and the all agents' joint actions are fed into the input layer. 
The output layer generates the global action value $Q$. 



\subsection{Parameter Study} 

In this experiment, we examine impact of the number of sampled actions in Eq. \eqref{our_baseline}. In other words, we try to answer the question \textit{Which one serves a better baseline, a single default action or the expectation of a group actions}? Mathematically, we examine the impact of $K$ for agent $i$ in:
\begin{equation} \label{parameter} 
	\begin{aligned} 
		b_K = \mathbb{E}_{k=1}^{K} Q(s, \boldsymbol{a}_{-i}, \bar{a}_i^k), \ \bar{a}_i^k\sim \mathcal{D}. 
	\end{aligned} 
\end{equation}

The key idea of SAFE is to estimates the average performance of an agent through a counterfactual baseline, which is derived from the centralized critic by replacing the agent's action with another action sampled from the agent's experience buffer. As the training goes on, the sampled action approaches the action that is most frequently executed by the agent. Therefore, the baseline gradually approximates the average performance of that agent in the task. To answer the above question, we compare SAFE$_{1}$ SAFE$_{30}$, SAFE$_{50}$, SAFE$_{80}$, and SAFE$_{100}$, where SAFE$_K$ is deviced by baseline $b_K$ and SAFE$_{1}$ is the default setting of SAFE. 

\begin{figure*}[bhtp]
	\centering
	\includegraphics[width=0.7\textwidth]{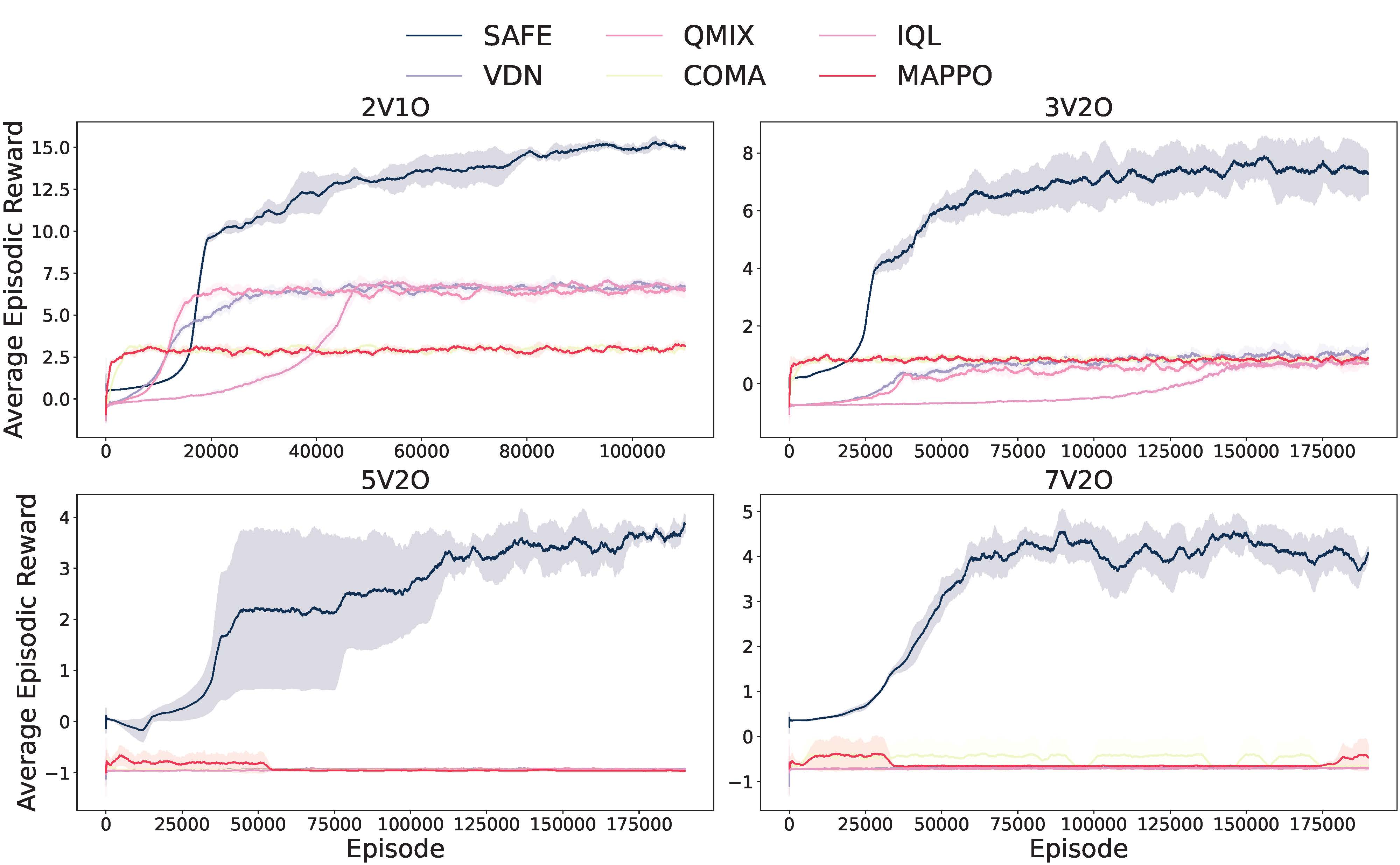} 
	\caption{Comparison of SAFE against benchmark models on highway task, under four scenarios: 2-Vehicle-1-Obstacle (2V1O), 3-Vehicle-2-Obstacle (3V2O), 5-Vehicle-2-Obstacle (5V2O), and 7-Vehicle-2-Obstacle (7V2O). }
	\label{fig:result1}
\end{figure*}

\begin{table*}[thb]
	\centering
	\begin{tabular}{|c |c c c c c c|} 
		\specialrule{.1em}{.1em}{.1em}
		\specialrule{.1em}{.1em}{.1em}
		& SAFE & VDN & QMIX & COMA & IQL & MAPPO \\ [0.5ex]  
		\specialrule{.1em}{.1em}{.1em} 
		2V1O & 0.029 & 0.053 & 0.048 & 0.859 & 0.062 & 0.770 \\ [0.5ex]  
		3V2O & 0.067 & 0.786 & 0.759 & 0.993 & 0.814 & 0.998 \\ [0.5ex] 
		5V2O & 0.082 & 0.982 & 0.981 & 1.000 & 0.990 & 1.000 \\ [0.5ex]  
		7V2O & 0.091 & 0.502 & 0.500 & 0.423 & 0.502 & 0.293 \\ [0.5ex]   
		\specialrule{.1em}{.1em}{.1em} 
		\specialrule{.1em}{.1em}{.1em}
	\end{tabular}
	\caption{Collision rate of trained models across all scenarios. }
	\label{table1}
\end{table*}

\subsection{Ablations} 

We conduct a number of ablations to show the effectiveness of the counterfactual baseline scheme in SAFE. 
In other words, we try to answer the question \textit{Is the superior performance of SAFE achieved by its credit assignment scheme, or its adaptation to continuous action space}? To answer this question, we compare SAFE$_{1}$, the default setting, with the following baselines that also adapt to continuous action space but either adopt a different credit assignment scheme or a centralized critic: 
\begin{itemize}
	\item SAFE\_a=0: Adopts the same baseline as in Eq. \eqref{our_baseline}, except that the replacing action is set to a default action $a_i=0$, rather than sampled from experience buffer. 
	\item SAFE\_batch\_mean: Adopts the same baseline as in Eq. \eqref{our_baseline}, except that the replacing action is set to the mean of a batch of 32 actions sampled uniformly from the experience buffer: $\bar{a}_i = \mathbb{E}_{k=1}^{32} \bar{a}_i^k, \ \bar{a}_i^k\sim \mathcal{D}$. 
	\item COMA\_cont: Continuous COMA, which adapts to continuous action space by approximating the baseline with Monte Carlo sampling.
	\item Centralized\_critic: Multiple agent maintain a centralized critic and decentralized actors. The centralized critic is directly trained by the team reward, while the decentralized actors are updated by maximizing the centralized $Q$. 
\end{itemize}


\section{Experimental Results} 

%



\begin{figure*}[t] 
	\centering
	\begin{subfigure}[b]{0.80\linewidth}
		\centering
		\includegraphics[width=1.0\linewidth]{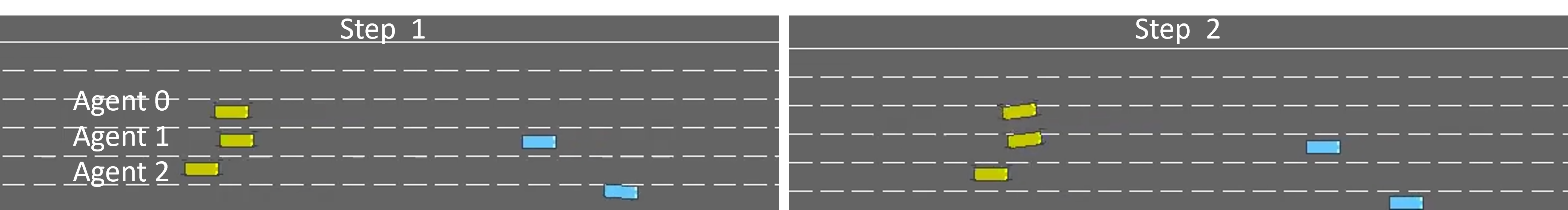} 
		\caption{} 
		\label{fig:result5a} 
	\end{subfigure} 
	\begin{subfigure}[b]{1.0\linewidth}
		\centering
		\includegraphics[width=1.0\linewidth]{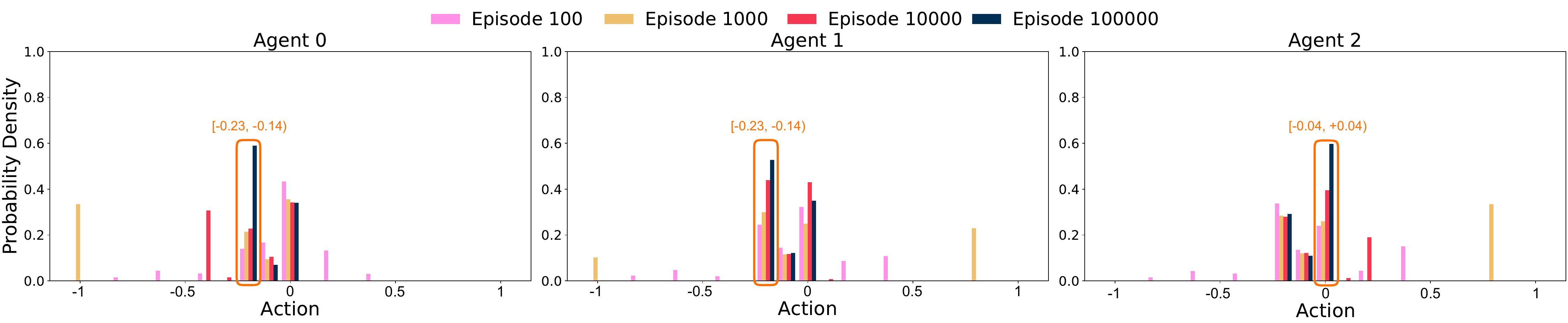} 
		\caption{} 
		\label{fig:result5b} 
	\end{subfigure} 
	\caption{(a). Trajectories of vehicles after 150,000 training episodes. (b). Probability density distributions of each agent's default actions at 100, 1,000, 10,000, and 100,000 training episodes. } 
	\label{fig:results5} 
\end{figure*} 

\begin{figure*}[t] 
	\centering
	\begin{subfigure}[b]{0.45\linewidth}
		\centering
		\includegraphics[width=1.0\linewidth]{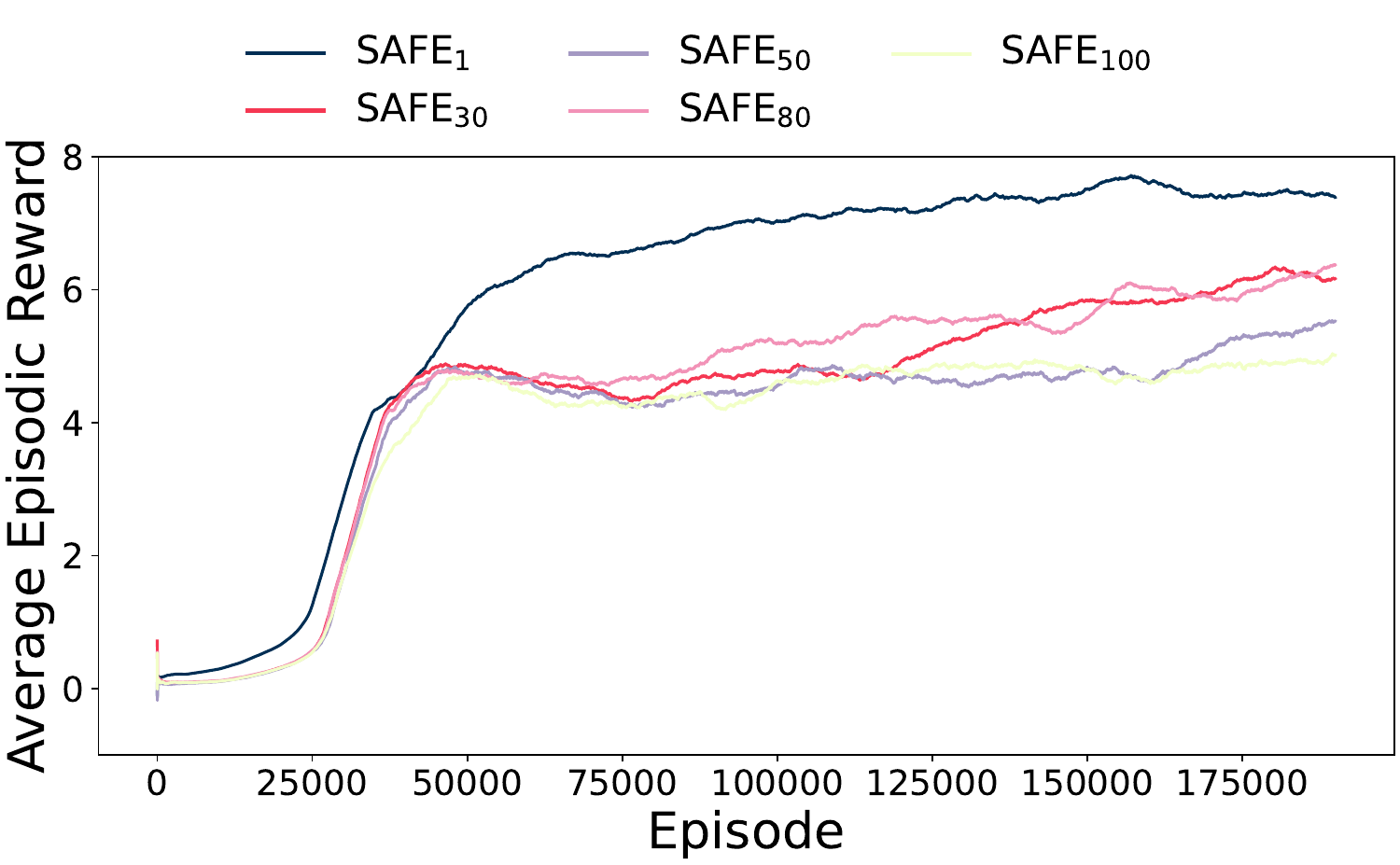} 
		\caption{} 
		\label{fig:result2} 
	\end{subfigure} 
	\begin{subfigure}[b]{0.45\linewidth}
		\centering
		\includegraphics[width=1.0\linewidth]{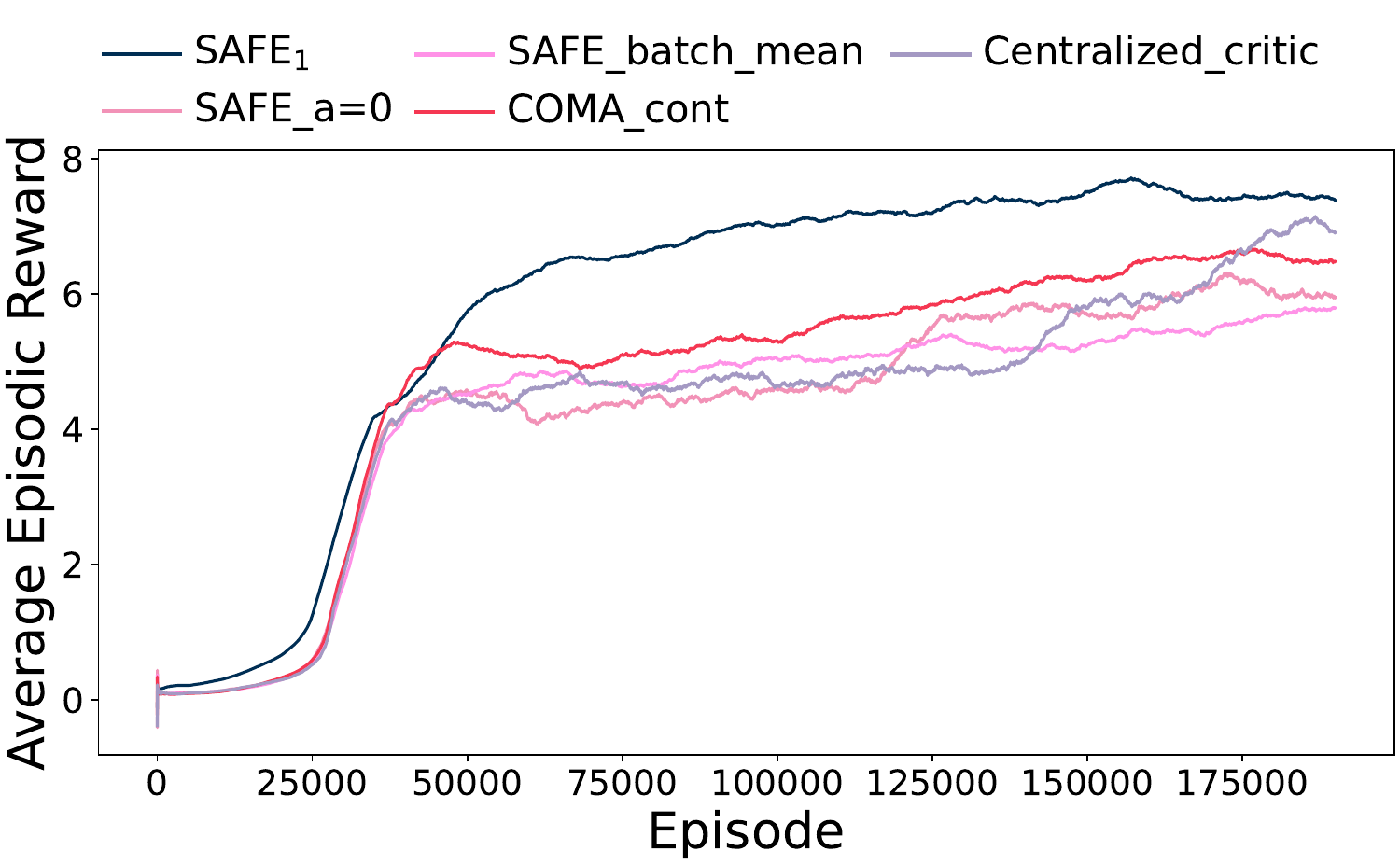} 
		\caption{} 
		\label{fig:result3} 
	\end{subfigure} 
	\caption{(a). Parameter study on the number of sampled actions in constructing our baseline. (b). Ablation study on the effectiveness of our counterfactual credit assignment scheme. } 
	\label{fig:results23} 
\end{figure*} 

Fig. \ref{fig:result1} shows the performance of SAFE in the four scenarios in comparison against the benchmark models. Because agents spawn in close proximity, even minor actions can cause collisions. Thus, the task demands highly precise and sensitive adjustments to succeed. Consequently, all benchmarks with discrete action spaces fail to completely avoid collisions, achieving poor performance across all five scenarios. In contrast, SAFE achieves outstanding performance across all scenarios, owing to its compatibility with continuous action spaces. Only in scenario 2V1O (the simplest of the four) did VDN, QMIX, and IQL converge to stable policies. In this case, the two controlled vehicles can avoid collisions by simply executing the discrete actions \textit{LEFT LANE} and \textit{RIGHT LANE}. However, these models fail in all other scenarios, where more subtle and finely tuned actions are required to prevent collisions. The coarse granularity of discrete meta-actions quickly traps the policies in local minima. It is also worth noting that COMA and MAPPO fail to converge to optimal policies across all four scenarios, largely due to the sample inefficiency of on-policy training. 
The numerical results in Table \ref{table1} demonstrate that SAFE consistently achieves significantly lower collision rates than COMA and QMIX, two state-of-the-art policy-based and value-based MARL models, across all tested scenarios. Collisions in these experiments may occur either between controlled vehicles or between a controlled vehicle and an obstacle vehicle. In scenario 2v1o, SAFE achieves a collision rate of 0.029, which is 97\% lower than COMA and 40\% lower than QMIX. In the more complex scenario 3v2o, SAFE achieves a collision rate of 0.067, representing improvements of 93\% and 91\% over to COMA and QMIX, respectively. This trend continues in scenario 5v2o, where SAFE outperforms both baselines by 92\%. Even in the most challenging scenario 7v2o, SAFE achieves a collision rate which is 78\% lower than COMA and 82\% lower than QMIX. It is worth noting that in scenarios 3v2o, 5v2o, and 7v2o, all the benchmark models fail to converge to locally optimal policies other than SAFE as illustrated in Fig. \ref{fig:result1}. Their collision rates, therefore, do not indicate the quality of policies. For instance, COMA achieving a lower collision rate in scenario 7v2o than in 5v2o is merely a result of the randomness in action selection and state transition. 


The self-evolving default action in our counterfactual baseline can be further examined through Fig. \ref{fig:results5}, which presents the probability density of default actions sampled from each agent's experience replay buffer. During the first 100 training episodes, agents 0 and 1 most frequently sample actions from the bin $[-0.04, +0.04)$. However, after 100,000 episodes, their sampled actions evolves to the bin $[-0.23,-0.14)$. In contrast, agent 2 initially concentrates its sampled actions in $[-0.23, -0.14)$ within the first 100 episodes, but evolves to $[-0.04, +0.04)$ after 100,000 episodes. The rationale behind this evolution is as follows: as training progresses, agents 0 and 1 gradually learn to steer leftward (taking actions $a\in[-0.23, -0.14)$) to avoid obstacle vehicles, while agent 2 tends to maintain its course (taking actions $a\in[-0.04, +0.04)$) to cooperatively avoid obstacles, as illustrated in Fig. \ref{fig:result5a}. Consequently, actions in $a\in[-0.23, -0.14)$ become dominant in the replay buffers of agents 0 and 1, whereas actions in $a\in[-0.04, +0.04)$ come to dominate agent 2's buffer. The default action sampled from the replay buffer effectively reflects the agents' average performance, as it corresponds to the most frequently executed action. In this regard, the distribution shown in Fig. \ref{fig:result5b} is consistent with a stochastic policy for a discrete action space.

Additionally, Fig. \ref{fig:result2} demonstrates that \textit{a single default action sampled from the agent's experience buffer forms a better baseline compared to the expectation of a group actions}. Increasing the number of samples does not help with the estimation accuracy of the baseline in Eq. \ref{our_baseline}, and introduces additional biases yielding unsatisfactory results. 
On the other hand, Fig. \ref{fig:result3} further demonstrates that \textit{the superior performance of SAFE is achieved by its credit assignment scheme, rather than its adaptation to continuous action space}. While all the baseline models adapt to continuous action space of agents, only SAFE yields superior performance, owning to the accuracy of its credit assignment scheme. The experiments are conducted in scenario 3V2O, because the task is neither too simple nor too complex, making it suited for studying the subtle impact of parameters and baselines on model performance. 

\section{Conclusive Remarks} 

This paper presents SAFE, a novel MARL framework for cooperative tasks with continuous action spaces. The key innovation is a counterfactual baseline conditioned on a self-evolving default action sampled from each agent's experience buffer, which naturally extends credit assignment to continuous action space without additional simulations or task-specific priors. We prove theoretically that this baseline introduces no bias into the deterministic policy gradient, guaranteeing convergence to local optima. Experiments on cooperative autonomous driving tasks demonstrate that SAFE consistently outperforms state-of-the-art methods including VDN, QMIX, IQL, COMA, and MAPPO across scenarios of varying complexity. Ablation studies confirm that the performance gains stem from the credit assignment scheme itself rather than mere adaptation to continuous actions. Parameter analysis shows that a single sampled default action forms a better baseline than averaging multiple samples. These results establish SAFE as an effective solution for real-world multi-agent systems requiring continuous control. 

\bibliography{aaai2027}


\end{document}